\documentclass[letterpaper, 10pt, conference]{ieeeconf}
\IEEEoverridecommandlockouts
\usepackage{amsmath}
\usepackage{amssymb}
\usepackage{graphicx}
\usepackage{booktabs}
\usepackage{multirow}
\usepackage{cite}
\usepackage{url}
\usepackage{xcolor}
\usepackage{placeins}
\usepackage{flafter}
\usepackage{dblfloatfix}
\usepackage{adjustbox}
\usepackage{flushend}
\usepackage[colorlinks=true, urlcolor=black]{hyperref}

\newtheorem{proposition}{Proposition}
\newtheorem{remark}{Remark}

\title{\LARGE \bf
WBAG: A Whole-Body and Attached-Geometry Safety Framework for Vision-Language-Action Manipulation}

\author{
Samuel Zhen$^{1}$,
Siwon Jo$^{2}$,
Yanze Zhang$^{3}$,
and Wenhao Luo$^{1}$%
\thanks{$^*$This work was supported in part by the U.S. National Science Foundation under Grant 2530297.}
\thanks{$^{1}$Department of Computer Science and Engineering,
Texas A\&M University, College Station, TX 77843, USA.
Email: {\tt \{szhen, wenhaol\}@tamu.edu}}
\thanks{$^{2}$GRASP Laboratory, University of Pennsylvania, Philadelphia, PA 19104, USA.
        Email: {\tt jo16@seas.upenn.edu}}
\thanks{$^{3}$Department of Computer Science, University of Illinois Chicago, Chicago, IL 60607, USA. Email:{\tt \{yzhan361\}@uic.edu}}
}

\begin{document}

\maketitle
\thispagestyle{empty}
\pagestyle{empty}

\begin{abstract}
Vision-language-action (VLA) policies have demonstrated impressive capabilities in generalizable robotic manipulation, but their deployment in the real world remains challenging due to potential collisions involving different parts of the robot, manipulated objects, and the surrounding environment. Existing inference-time VLA safety frameworks typically rely on simplified end-effector-centered representations that do not explicitly model the full articulated robot and attached-object geometry. In this paper, we present WBAG, a safety framework that models the robot's whole-body and grasp-dependent attached geometry. WBAG constructs a grasp-conditioned safe set that adapts the protected geometry as objects are grasped, then converts this evolving geometry into differentiable CBF constraints that minimally modify the VLA's six-dimensional operational-space action for collision avoidance across robot, scene, and attached geometry. On the SafeLIBERO benchmark, a variant of LIBERO augmented with obstacles for safety evaluation, WBAG achieves the best overall safety and safe task success among the evaluated methods under a scene-level safety evaluator that monitors all eligible non-task objects, reaching 97.38\% aggregate Scene Safety and 59.38\% Safe Success. \textbf{Project page:} \url{https://samuelzhen.com/projects/wbag}
\end{abstract}

\section{Introduction}

Vision-language-action (VLA) models map visual observations and language
instructions directly to robot actions across diverse manipulation tasks
\cite{pi05}, but their task-level generalization does not provide an explicit
geometric safety model to consistently safeguard robot motions. In cluttered environments, a VLA action may move the
end effector toward the target while causing an upstream link, wrist, gripper,
or manipulated object to collide. The relevant collision geometry spans the
articulated robot and changes when the robot grasps an object. Recent benchmark
evaluations further show that task completion alone can obscure physical safety
violations during manipulation~\cite{fan2026safevlabench,cui2026liberosafety}.

\begin{figure}[!t]
    \centering
    \includegraphics[width=0.90\columnwidth]{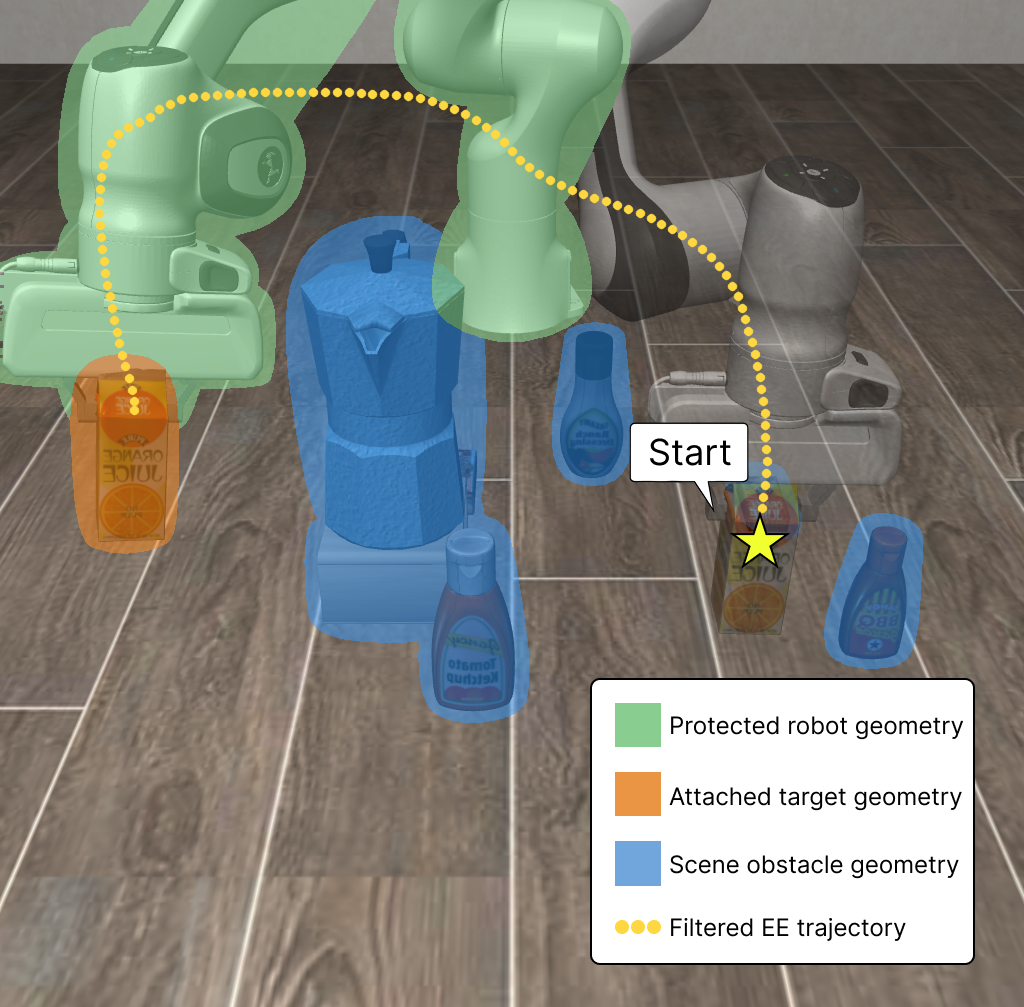}
    \caption{Example of our WBAG execution. Green shows the constructed protected whole-body robot
    geometry, and orange shows the attached target geometry after grasping, with blue
    showing the extracted scene obstacle geometry. The yellow dotted curve is the
    filtered safe end-effector trajectory.}
    \label{fig:qualitative_comparison}
    \vspace{-18pt}
\end{figure}

Control-theoretic safety methods, particularly control barrier functions (CBFs),
provide a principled mechanism for enforcing collision avoidance during robotic
control \cite{ames_cbf_qp}. CBFs constrain control inputs to maintain forward invariance of a prescribed
safe set, but their effectiveness depends on how that set is represented. For
geometric collision avoidance, the safety function must capture separation
between the protected robot geometry and surrounding obstacles. The geometric
representation therefore determines which collision modes can be enforced.

Recent work on safety-critical control with VLA-based policies \cite{safevla,vlsa_aegis,knows,english2026neurosymbolic,sinaei2026barrier} spans both training-time and inference-time interventions.
SafeVLA \cite{safevla} incorporates safety constraints during policy optimization through
constrained reinforcement learning. At inference time, AEGIS \cite{vlsa_aegis}
combines perception-derived geometric information with CBF constraints to filter
a pretrained VLA, while KNOWS \cite{knows} combines VLA-based target
identification with online obstacle tracking. Other methods enforce safety directly within flow-matching action generation.
Recent work in~\cite{english2026neurosymbolic} apply predictive
trajectory-level barrier constraints through minimum-norm corrections during
iterative denoising, while~\cite{sinaei2026barrier} modify the
flow-matching dynamics using an aggregate CBF over the generated action chunk.
However, existing
inference-time safety frameworks
typically enforce collision avoidance using simplified end-effector-centered
geometry \cite{vlsa_aegis,knows, english2026neurosymbolic}. Such end-effector-centered representations do
not explicitly model the full articulated robot or the rigid geometry of an
attached object: the end effector may remain
separated while an upstream link, wrist, or attached geometry collides with an
obstacle.

Geometric methods such as Bernstein-polynomial signed distance fields (BP-SDFs) \cite{bpsdf} provide smooth, differentiable representations of object geometry, enabling efficient distance queries and gradients that are well suited for optimization-based control. For articulated robots, BP-SDFs can represent individual links, whose poses are determined by forward kinematics as a function of robot configuration. Recent work \cite{geometry_cbf} employs BP-SDFs to construct geometry-aware CBFs from the minimum distance between robot and obstacle geometries and derive corresponding safety constraints on the robot control input. Applying such geometric safety constraints to VLA manipulation requires connecting perception, manipulation state, and the policy's native action interface. In our setting, scene and target geometry are reconstructed from RGB-D observations rather than assumed known, while the protected geometry changes when the target becomes rigidly attached after grasping. At the same time, the resulting link- and object-level clearances are functions of robot configuration, whereas the frozen VLA produces six-dimensional operational-space actions. These factors must therefore be coupled so that perception-derived, grasp-conditioned geometric constraints can be enforced directly on the VLA action without retraining the VLA or modifying the operational-space controller.

In this paper, we introduce \textbf{WBAG}, a whole-body and attached-geometry safety framework for pretrained VLA policies. WBAG reconstructs scene and target geometry from RGB-D observations and constructs a \textbf{grasp-conditioned safe set}: before grasping, it protects the articulated robot; after grasping, the protected set expands to include the rigidly attached object. We represent robot, scene, and attached geometry with differentiable BP-SDFs and enforce the resulting CBF constraints directly on the VLA's native six-dimensional operational-space action. WBAG minimally modifies the policy output while accounting for collisions involving the whole robot and attached geometry, without retraining the underlying VLA or modifying the operational-space controller. We evaluate WBAG on SafeLIBERO using a scene-level safety evaluator that monitors all eligible non-task scene objects. WBAG achieves the highest aggregate Scene Safety and Safe Success among the evaluated configurations, reaching \(97.38\%\) and \(59.38\%\), respectively.
Our main contributions are summarized as the following:
\begin{itemize}
    \item We formulate a \textbf{grasp-conditioned geometric CBF} over articulated robot links and dynamically attached geometry, with the protected set adapting across pre- and post-grasp manipulation.

    \item We develop \textbf{WBAG}, a novel perception-to-control safety framework that reconstructs scene and target geometry from visual observations, maintains grasp-conditioned protected geometry, and enforces the resulting constraints directly on the VLA’s native operational-space action without retraining the VLA or modifying the operational-space controller used to execute its actions.

    \item We conduct a \textbf{controlled SafeLIBERO evaluation} with scene-level safety measurement that isolates the effects of scene coverage, whole-body protection, and attached-geometry modeling on safe VLA manipulation.
\end{itemize}

\begin{figure*}[!t]
    \centering
    \includegraphics[width=\textwidth]{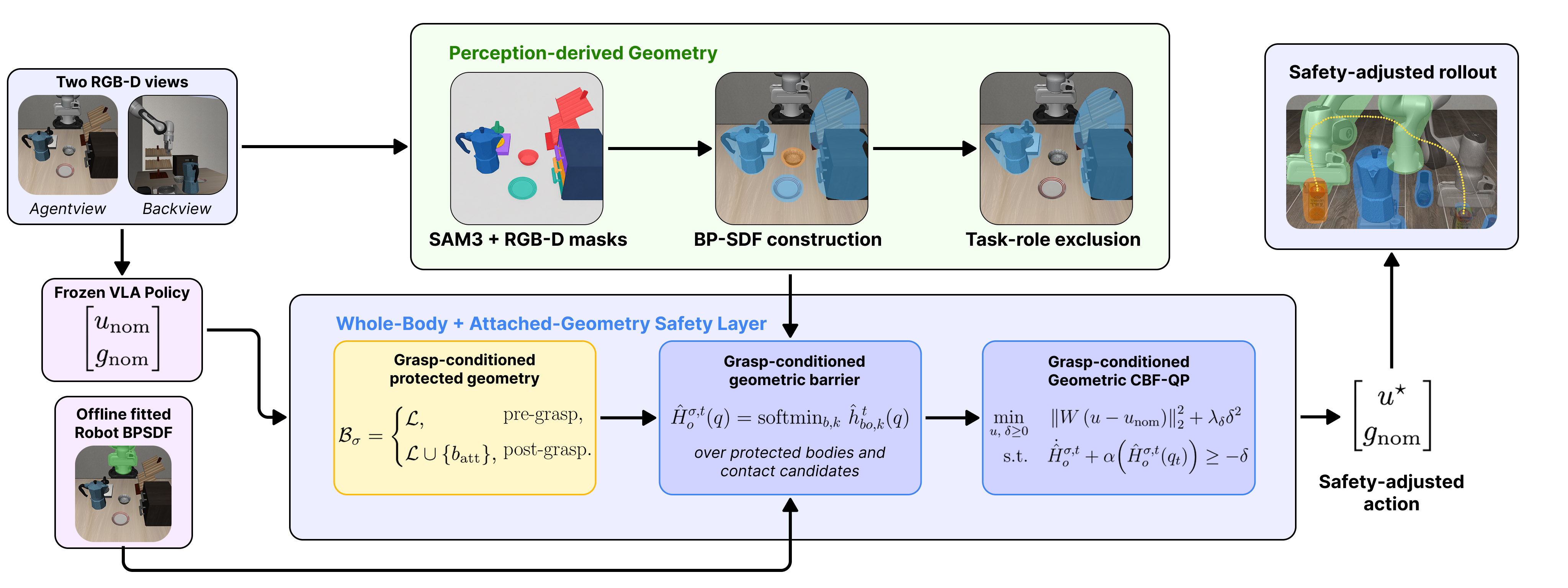}
    \caption{Overview of WBAG. RGB-D observations reconstruct scene geometry,
    while robot BP-SDFs are fitted offline. Task-role exclusion defines scene
    obstacles, and the protected set expands from the articulated robot to
    include attached geometry after grasp confirmation. The resulting composite
    clearances are enforced by the grasp-conditioned geometric CBF-QP.
    }
    \label{fig:wbag_framework}
\end{figure*}

\section{Preliminaries}

\subsection{Control Barrier Functions}

Consider the control-affine system
\begin{equation}
    \dot{x} = f(x) + g(x)u,
    \label{eq:control_affine}
\end{equation}
where $x\in\mathcal{X}\subseteq\mathbb{R}^{n_x}$ is the system state,
$u\in\mathcal{U}\subseteq\mathbb{R}^{n_u}$ is the control input, and
$f:\mathcal{X}\rightarrow\mathbb{R}^{n_x}$ and
$g:\mathcal{X}\rightarrow\mathbb{R}^{n_x\times n_u}$ are locally
Lipschitz continuous.
Let $h:\mathcal{X}\rightarrow\mathbb{R}$ be continuously differentiable and
define the safe set
\begin{equation}
    \mathcal{C}
    =
    \{x\in\mathcal{X}:h(x)\geq0\}.
    \label{eq:generic_safe_set}
\end{equation}
Following the standard CBF formulation~\cite{ames_cbf_qp}, $h$ is a
control barrier function if there exists an extended class-$\mathcal{K}$
function $\alpha$ such that
\begin{equation}
    \sup_{u\in\mathcal{U}}
    \left\{
    \nabla_x h(x)^\top
    \left(f(x)+g(x)u\right)
    \right\}
    \geq
    -\alpha\!\left(h(x)\right),
    \label{eq:cbf_definition}
\end{equation}
for all $x\in\mathcal{X}$. Accordingly, the admissible control set is
\begin{equation}
    \mathcal{B}(x)
    =
    \left\{
    u\in\mathcal{U}:
    \nabla_x h(x)^\top
    \left(f(x)+g(x)u\right)
    +
    \alpha\!\left(h(x)\right)
    \geq0
    \right\}.
    \label{eq:cbf_admissible_set}
\end{equation}
Any Lipschitz continuous controller satisfying
$u(x)\in\mathcal{B}(x)$ renders $\mathcal{C}$ forward invariant.

For geometric collision avoidance, $h$ can be defined from robot--obstacle
clearance.

\subsection{Bernstein Polynomial Signed Distance Fields}

We represent rigid-body geometry using Bernstein polynomial signed distance
fields (BP-SDFs)~\cite{bpsdf,geometry_cbf}. For normalized local coordinates
$\xi\in[0,1]^3$, let $B(\xi)\in\mathbb{R}^{Q^3}$ denote the
tensor-product Bernstein basis of order $Q$. The local-frame BP-SDF of a
rigid body $b$ is the scalar-valued function
$\bar{\phi}_b:[0,1]^3\rightarrow\mathbb{R}$,
\begin{equation}
    \bar{\phi}_b(\xi) = c_b^\top B(\xi),
    \label{eq:bpsdf}
\end{equation}
where $c_b\in\mathbb{R}^{Q^3}$ contains the fitted Bernstein coefficients.
The value $\bar{\phi}_b(\xi)\in\mathbb{R}$ approximates the signed distance
to the surface of body $b$ at the spatial location represented by $\xi$.
We use the convention that $\bar{\phi}_b>0$ outside the body,
$\bar{\phi}_b=0$ on its surface, and $\bar{\phi}_b<0$ inside the body.

For an articulated robot, let $q\in\mathbb{R}^{n_q}$ denote the robot
configuration. Forward kinematics determines the pose of each robot link
$b$ as a function of $q$. For a world-frame query point
$x\in\mathbb{R}^3$, let $\xi_b(x;q)\in[0,1]^3$ denote its normalized local
coordinates in the frame of link $b$. The corresponding world-frame BP-SDF is
\begin{equation}
    \phi_b(x;q) = \bar{\phi}_b\!\left(\xi_b(x;q)\right),
    \label{eq:world_bpsdf}
\end{equation}
where $\phi_b:\mathbb{R}^3\times\mathbb{R}^{n_q}\rightarrow\mathbb{R}$
returns the signed distance from $x$ to the represented geometry of link $b$
at configuration $q$.

The smooth Bernstein representation provides spatial distance gradients, and
its composition with forward kinematics makes $\phi_b(x;q)$ differentiable
with respect to $q$, enabling geometric clearance to be incorporated into
optimization-based control and the geometric CBF construction discussed in Section~III. 
\subsection{Problem Formulation}

Let $\pi$ denote a frozen VLA policy, instantiated in our experiments with
$\pi_{0.5}$~\cite{pi05}, that maps an observation $o$ and language
instruction $\ell$ to a nominal action
\begin{equation}
    a_{\mathrm{nom}}
    =
    \pi(o,\ell)
    =
    \begin{bmatrix}
        u_{\mathrm{nom}} \\
        g_{\mathrm{nom}}
    \end{bmatrix}
    \in \mathbb{R}^{7},
    \label{eq:vla_action}
\end{equation}
where
\begin{equation}
    u_{\mathrm{nom}}
    =
    \begin{bmatrix}
        \Delta p \\
        \Delta r
    \end{bmatrix}
    \in \mathbb{R}^{6}
    \label{eq:vla_motion_action}
\end{equation}
is the operational-space motion command. Here,
$\Delta p\in\mathbb{R}^{3}$ and $\Delta r\in\mathbb{R}^{3}$ denote the
translational and axis--angle rotational increments, respectively, and
$g_{\mathrm{nom}}\in\mathbb{R}$ is the scalar gripper command.

Let $q\in\mathbb{R}^{n_q}$ denote the robot joint configuration and
$\mathcal{L}$ the set of modeled moving robot bodies. Let
$\sigma\in\{0,1\}$ denote the grasp mode, with $\sigma=0$ before grasp
confirmation and $\sigma=1$ after grasp confirmation. Let
$b_{\mathrm{att}}$ denote the attached target body, and define the
grasp-conditioned protected set
\begin{equation}
    \mathcal{B}_\sigma
    =
    \begin{cases}
        \mathcal{L}, &
        \sigma=0, \\[1mm]
        \mathcal{L}\cup\{b_{\mathrm{att}}\}, &
        \sigma=1.
    \end{cases}
    \label{eq:protected_set}
\end{equation}
Thus, the set of protected bodies changes from the articulated robot to the
robot--object system after attachment.

Let $\mathcal{O}$ denote the set of reconstructed scene bodies subject to collision-avoidance constraints. Objects requiring intentional task contact are excluded from $\mathcal{O}$, and the attached target is not constrained against itself. The protected bodies in $\mathcal{B}_\sigma$ and scene obstacles in $\mathcal{O}$ are represented using BP-SDFs.

Given $(u_{\mathrm{nom}},q,\sigma,\mathcal{O})$, our \textbf{objective} is to compute
a step-wise safety-adjusted command $u^\star\in\mathbb{R}^{6}$ that minimally
deviates from $u_{\mathrm{nom}}$ while maintaining collision-free operation
between every protected body $b\in\mathcal{B}_\sigma$ and every scene obstacle
$o\in\mathcal{O}$. The executed action is
\begin{equation}
    a^\star
    =
    \begin{bmatrix}
        u^\star \\
        g_{\mathrm{nom}}
    \end{bmatrix}
    \in\mathbb{R}^{7}.
    \label{eq:safe_vla_action}
\end{equation}
The safety filter therefore modifies only the operational-space motion command
while preserving the VLA's gripper command.

\section{Grasp-Conditioned Geometric Barrier}

We first derive the barrier using ideal closest-point evaluation. Section~IV
then introduces the finite-contact approximation used online.
\subsection{Pairwise BP-SDF Clearance}

For protected body $b\in\mathcal{B}_\sigma$, let
$\phi_b(x;q)$ denote its world-frame BP-SDF from
\eqref{eq:world_bpsdf}. For obstacle $o\in\mathcal{O}$, let
$\phi_o(x)$ denote its BP-SDF and define the offset obstacle level set
\begin{equation}
    \Gamma_o
    =
    \{x\in\mathbb{R}^3:\phi_o(x)=\ell_o\},
    \label{eq:obstacle_level_set}
\end{equation}
where $\ell_o>0$ is the obstacle offset.

For a given robot joint configuration $q$,
define
\begin{equation}
    x^\star_{bo}(q)
    =
    \arg\min_{x\in\Gamma_o}
    \phi_b(x;q).
    \label{eq:closest_point}
\end{equation}
Here $x^\star_{bo}(q)$ is the point on the obstacle offset surface $\Gamma_o$ with minimum BP-SDF distance to protected body $b$.
The corresponding pairwise clearance barrier is
\begin{equation}
    h_{bo}(q)
    =
    \phi_b\!\left(x^\star_{bo}(q);q\right)-\ell_b
    =
    \min_{x\in\Gamma_o}\phi_b(x;q)-\ell_b,
    \label{eq:pairwise_barrier}
\end{equation}
where $\ell_b\geq0$ is the protected-body margin. Thus, $h_{bo}(q)\geq0$ enforces the prescribed geometric separation between protected body $b$ and obstacle $o$.

Following the closest-point gradient result in ~\cite{geometry_cbf}, assume that the relevant protected-body and obstacle BP-SDF level sets are convex, that $x^\star_{bo}(q)$ varies smoothly with $q$, and that the corresponding BP-SDF gradients are nonzero on the level-set boundaries. The pairwise clearance gradient is then
\begin{equation}
    \nabla_q h_{bo}(q)
    =
    \nabla_q
    \phi_b\!\left(x^\star_{bo}(q);q\right).
    \label{eq:pairwise_barrier_gradient}
\end{equation}
\subsection{Grasp-Conditioned Composite Barrier}
The pairwise barrier $h_{bo}$ measures clearance for one protected-body--obstacle
pair. To enforce clearance for all protected bodies with a single constraint per
obstacle, we aggregate the pairwise barriers using a soft minimum:
\begin{equation}
    H_o^\sigma(q)
    =
    -\frac{1}{\beta}
    \log
    \left(
        \sum_{b\in\mathcal{B}_\sigma}
        e^{-\beta h_{bo}(q)}
    \right),
    \qquad \beta>0.
    \label{eq:composite_barrier}
\end{equation}
where $\beta$ is the soft-min sharpness parameter, with larger values
providing a tighter approximation to the minimum. Since $H_o^\sigma(q)\leq\min_{b\in\mathcal{B}_\sigma}h_{bo}(q)$, the
condition $H_o^\sigma(q)\geq0$ implies
$h_{bo}(q)\geq0$ for every $b\in\mathcal{B}_\sigma$.

We therefore define the grasp-conditioned safe set
\begin{equation}
    \mathcal{C}_\sigma
    =
    \bigcap_{o\in\mathcal{O}}
    \left\{
        q:
        H_o^\sigma(q)\geq0
    \right\}.
    \label{eq:grasp_safe_set}
\end{equation}

\subsection{Action-to-Motion Mapping and Fixed-Mode Safety}

For a fixed grasp mode $\sigma$, let $u\in\mathbb{R}^6$ denote an
operational-space control command. We locally relate $u$ to joint motion by
\begin{equation}
    \dot q = G(q)u,
    \label{eq:local_action_map}
\end{equation}
where $G(q)\in\mathbb{R}^{n_q\times6}$ is the local action-to-motion mapping.

To derive the corresponding control constraint, differentiating the soft-min barrier in~\eqref{eq:composite_barrier} gives
\begin{equation}
\begin{aligned}
    \nabla_q H_o^\sigma(q)
    &=
    \sum_{b\in\mathcal{B}_\sigma}
    w_{bo}(q)\nabla_q h_{bo}(q),\\[-1pt]
    w_{bo}(q)
    &=
    \frac{e^{-\beta h_{bo}(q)}}
    {\sum_{b'\in\mathcal{B}_\sigma}
    e^{-\beta h_{b'o}(q)}}.
\end{aligned}
\label{eq:softmin_gradient}
\end{equation}
Using~\eqref{eq:local_action_map}, its time derivative is therefore
\begin{equation}
    \dot H_o^\sigma(q)
    =
    \nabla_q H_o^\sigma(q)^\top G(q)u.
    \label{eq:composite_barrier_derivative}
\end{equation}
The corresponding CBF constraint thus becomes
\begin{equation}
    \nabla_q H_o^\sigma(q)^\top G(q)u
    +
    \alpha\!\left(H_o^\sigma(q)\right)
    \geq0,
    \qquad
    \forall o\in\mathcal{O}.
    \label{eq:fixed_mode_cbf}
\end{equation}

\begin{proposition}[Fixed-mode grasp-conditioned safety]
\label{prop:fixed_mode_safety}
Fix $\sigma\in\{0,1\}$ and suppose the convexity and regularity assumptions of \eqref{eq:pairwise_barrier_gradient} hold. If
$q(0)\in\mathcal{C}_\sigma$ and
\eqref{eq:fixed_mode_cbf} is satisfied for every
$o\in\mathcal{O}$, then $\mathcal{C}_\sigma$ is forward invariant under
\eqref{eq:local_action_map}.
\end{proposition}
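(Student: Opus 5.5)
The plan is to reduce the statement to the standard CBF forward-invariance result recalled after \eqref{eq:cbf_admissible_set}. I will apply it once per obstacle and then intersect the resulting invariant sets.

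\textbf{Step 1: regularity of each barrier.} I first show that each composite barrier $H_o^\sigma$ is continuously differentiable in $q$. Under the convexity and regularity assumptions stated before \eqref{eq:pairwise_barrier_gradient}, the closest point $x^\star_{bo}(q)$ varies smoothly. Since $\phi_b(x;q)$ is a polynomial in $\xi_b$ composed with smooth forward kinematics, each $h_{bo}$ is $C^1$, with gradient \eqref{eq:pairwise_barrier_gradient}. The log-sum-exp in \eqref{eq:composite_barrier} is smooth, so $H_o^\sigma$ is $C^1$ and its gradient is \eqref{eq:softmin_gradient}. Because $\sigma$ is fixed, $\mathcal{B}_\sigma$ is a fixed finite index set along the trajectory. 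This is what lets a single time-invariant function serve as the barrier.

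\textbf{Step 2: one obstacle at a time.} Fix $o\in\mathcal{O}$. View \eqref{eq:local_action_map} as a control-affine system \eqref{eq:control_affine} with state $q$, drift $f\equiv 0$, and input matrix $G(q)$. Along any solution, the chain rule gives \eqref{eq:composite_barrier_derivative}. Hypothesis \eqref{eq:fixed_mode_cbf} then yields the differential inequality
\begin{equation*}
\dot H_o^\sigma(q(t)) \ge -\alpha\!\left(H_o^\sigma(q(t))\right).
\end{equation*}
Write $y(t)=H_o^\sigma(q(t))$, so that $y(0)\ge 0$ because $q(0)\in\mathcal{C}_\sigma$. Compare $y$ with the solution of $\dot z=-\alpha(z)$, $z(0)=y(0)$. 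Since $\alpha$ is extended class-$\mathcal{K}$, the point $z=0$ is an equilibrium, so $z(t)\ge 0$ for all $t$. The comparison lemma then gives $y(t)\ge z(t)\ge 0$.

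Alternatively, I can argue by Nagumo's theorem. At the boundary $H_o^\sigma=0$, the inequality reduces to $\dot H_o^\sigma\ge 0$, so trajectories cannot leave the set. This route avoids needing uniqueness for $\dot z=-\alpha(z)$ when $\alpha$ is merely continuous. To make the comparison rigorous, I will take $\alpha$ locally Lipschitz, as in \cite{ames_cbf_qp}, and assume the applied input $u(t)$ is Lipschitz in $q$. Together with smoothness of $G$, this guarantees that solutions exist and are unique.

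\textbf{Step 3: intersection.} Step 2 holds for every $o$, and $\mathcal{O}$ is finite. Hence each set $\{q:H_o^\sigma(q)\ge0\}$ is forward invariant, and so is their intersection $\mathcal{C}_\sigma$ in \eqref{eq:grasp_safe_set}. Finally, the bound $H_o^\sigma\le\min_b h_{bo}$ noted after \eqref{eq:composite_barrier} implies that every pairwise clearance stays nonnegative.

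\textbf{Main obstacle.} The delicate point is Step 1 rather than the comparison argument. The minimizer $x^\star_{bo}(q)$ must stay unique and smooth along the whole trajectory, not just at isolated configurations, for $H_o^\sigma$ to be $C^1$ and for \eqref{eq:pairwise_barrier_gradient} to be the true derivative. I plan to rely directly on the stated assumptions for this, invoking the closest-point result of \cite{geometry_cbf}. With the convex level sets and nonvanishing gradients assumed there, the envelope-theorem step needs no derivative term for $x^\star$. I would flag that the proposition holds only on the region where these assumptions are valid.
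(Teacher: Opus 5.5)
Your proposal is correct and follows the same route as the paper: $C^1$ regularity of each $h_{bo}$ under the stated assumptions, hence of the soft-min $H_o^\sigma$ for the fixed index set $\mathcal{B}_\sigma$; then the standard CBF invariance argument for each obstacle, the intersection over $\mathcal{O}$, and the bound $H_o^\sigma\le\min_{b}h_{bo}$ for pairwise clearance. The only difference is that you spell out the standard CBF step via the comparison lemma, with explicit Lipschitz assumptions on $\alpha$ and $u$, where the paper simply invokes it; your Nagumo alternative would additionally need $\nabla_q H_o^\sigma\neq 0$ wherever $H_o^\sigma=0$, which the comparison route does not require.
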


\begin{proof}
For fixed $\sigma$, the protected set $\mathcal{B}_\sigma$ is fixed, and
each $h_{bo}$ is continuously differentiable under the convexity and regularity assumptions of \eqref{eq:pairwise_barrier_gradient}. Hence, the soft-min composition
$H_o^\sigma$ is continuously differentiable. Equation~\eqref{eq:fixed_mode_cbf}
is therefore the standard CBF condition for the set
$\{q:H_o^\sigma(q)\geq0\}$. Enforcing it for every
$o\in\mathcal{O}$ renders their intersection
$\mathcal{C}_\sigma$ forward invariant. Since
$H_o^\sigma\leq\min_{b\in\mathcal{B}_\sigma}h_{bo}$,
membership in $\mathcal{C}_\sigma$ also implies nonnegative clearance for
every represented protected-body--obstacle pair.
\end{proof}

\begin{remark}[Grasp-mode transition]
Proposition~\ref{prop:fixed_mode_safety} applies only within a fixed grasp mode.
At attachment, the protected set expands from $\mathcal{B}_0$ to
$\mathcal{B}_1$ by including the grasped object. The corresponding safe set
therefore introduces additional clearance constraints involving the attached
object. Although the robot configuration is continuous through the grasp
transition, a configuration in $\mathcal{C}_0$ may not belong to
$\mathcal{C}_1$ if the newly attached object is already too close to another
scene object. Thus, the proposition does not provide a safety guarantee across
the discrete grasp transition itself. If the robot configuration at attachment lies in
$\mathcal{C}_1$, the fixed mode forward invariance guarantee applies after
attachment. 
\end{remark}

\section{WBAG Safe Control Framework}

Section~III defines a grasp-conditioned geometric barrier that assumes exact continuous closest-point evaluation between each protected body and obstacle. 

WBAG instantiates this formulation as the
perception-to-control safety pipeline shown in Fig.~\ref{fig:wbag_framework}.
At episode initialization, multi-view RGB-D observations are used to
reconstruct scene and target geometry, while robot-link BP-SDFs are
precomputed offline. During execution, the protected set is updated when a
grasp is confirmed and the attached-object geometry is propagated with the
robot hand. The ideal closest-point barriers are approximated online using
finite local contact search, and the resulting geometric constraints are
mapped through a local action-to-motion model to the VLA's six-dimensional
operational-space command. WBAG then solves a quadratic program (QP) that minimally modifies the nominal VLA action while satisfying the grasp-conditioned geometric CBF constraints.

\subsection{Perception-derived Scene and Target Geometry}

To instantiate the geometric barriers in Section~III, WBAG combines precomputed robot geometry with perception-derived scene and target geometry. Robot-link BP-SDFs are precomputed offline from collision meshes using axis-wise second-difference constraints on the Bernstein coefficients to
promote convex BP-SDF level sets. At episode initialization, scene and target geometry are reconstructed from multi-view RGB-D observations, as shown in Fig.~\ref{fig:wbag_framework}. SAM3~\cite{sam3} provides object masks for the camera observations, and the corresponding masked depth observations are fused into world-frame point clouds. Task-role information identifies the target, goal, and support objects and determines which reconstructed components are excluded from the obstacle set.

For each reconstructed object, including the task target, we construct a convex hull of the observed points and fit a BP-SDF using the same convexity-promoting constraints. The retained non-task objects define the scene obstacle set $\mathcal{O}$, while the target BP-SDF is retained for incorporation into the protected set after grasping.

\begin{figure}[!t]
    \centering
    \includegraphics[width=0.50\columnwidth]{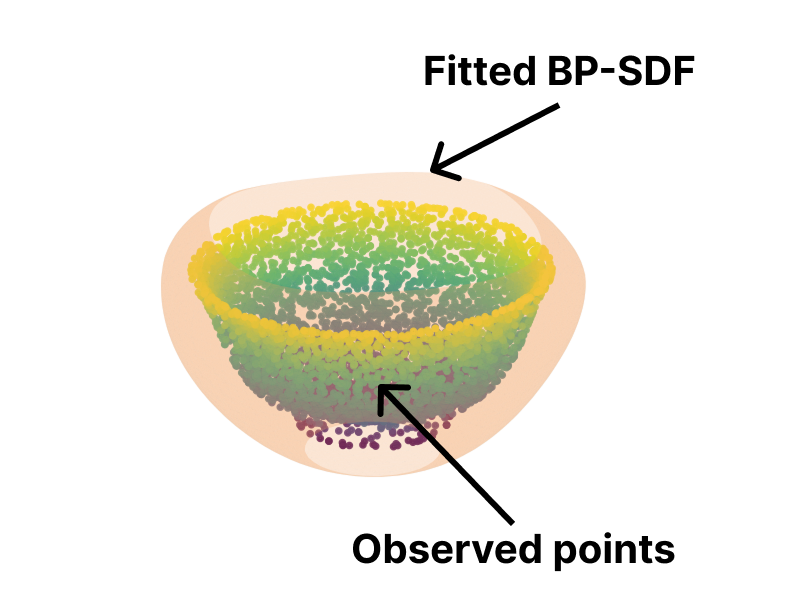}\\[-8pt]
    {\small (a) Target BP-SDF construction}
    \includegraphics[width=\columnwidth]{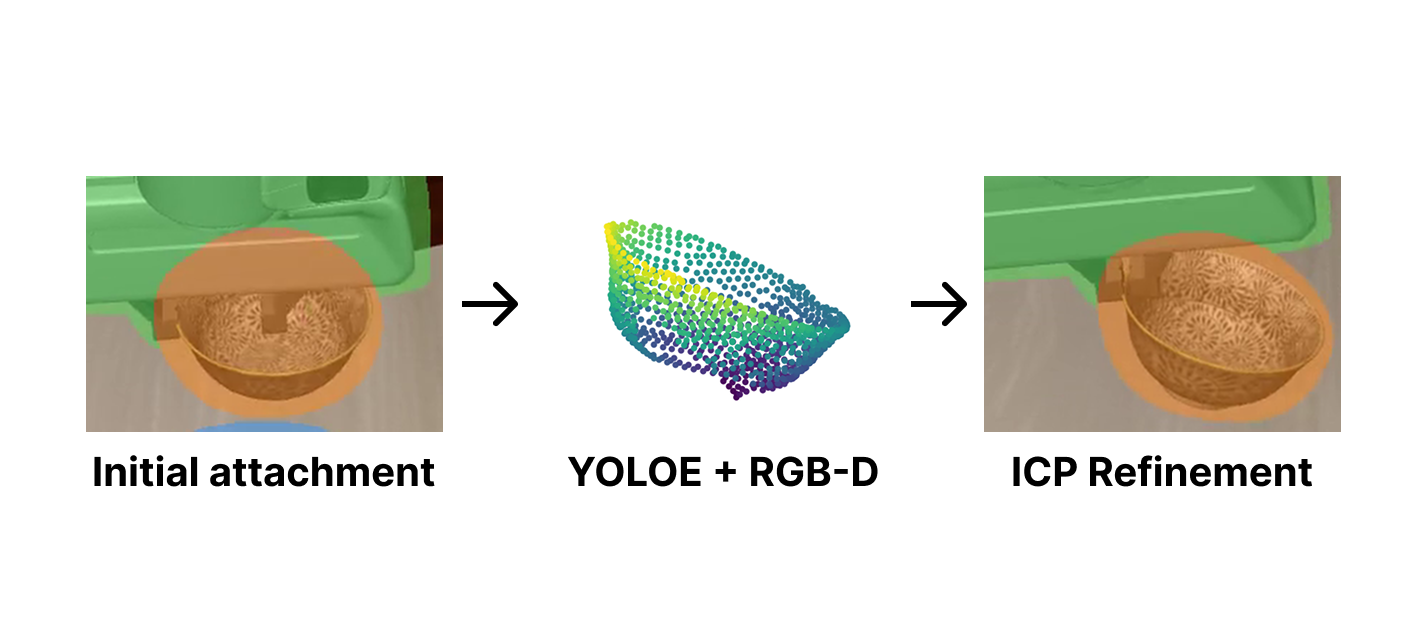}\\[-24pt]
    {\small (b) Post-grasp pose refinement}
    \caption{Attached-geometry construction and refinement. (a) The task target
    is reconstructed from RGB-D observations and represented by a fitted BP-SDF.
    (b) After grasp confirmation, target RGB-D points are registered to the
    initial model with ICP, refining the hand-relative attachment pose before
    rigid propagation with the gripper.
    }
    \label{fig:attached_geometry}
\end{figure}

\subsection{Grasp-Conditioned Attached Geometry}
To realize the grasp-conditioned protected set $\mathcal{B}_\sigma$ from Section~III, WBAG detects attachment and estimates the target pose relative to the robot hand. Grasp confirmation is determined from gripper closure and target proximity over consecutive control steps. Upon confirmation, we set $\sigma=1$; opening the gripper resets $\sigma=0$. Let $W$, $H$, and $A$ denote the world, hand, and attached-object model frames, respectively, and let $T_{XY}\in SE(3)$ denote the rigid transform from a frame $Y$ to another frame $X$. At grasp time $t_g$, the hand-relative attachment transform is initialized as
\begin{equation}
    T_{HA}^{t_g}
    =
    T_{WH}(q_{t_g})^{-1}
    \hat T_{WA}^{t_g},
    \label{eq:initial_attachment_transform}
\end{equation}
where $\hat T_{WA}^{t_g}$ is the perception-derived target pose obtained from the reconstruction in Section~IV-A. The object's pose relative to the hand can change immediately after grasping as it settles within the gripper, since contact with the gripper and the object's geometry may induce a post-grasp reorientation. WBAG therefore refines the attachment transform during a short post-grasp window. YOLOE-26~\cite{yoloe} is used for repeated target segmentation during this window because it provides computationally efficient segmentation suitable for online inference, and the corresponding RGB-D points are registered to the initial target model using ICP~\cite{icp} to update $T_{HA}$. After refinement, $T_{HA}$ is held fixed and the attached geometry is propagated rigidly with the hand according to
\begin{equation}
    T_{WA}(q_t)
    =
    T_{WH}(q_t)T_{HA}.
    \label{eq:attached_pose_propagation}
\end{equation}
This propagated pose determines the world-frame BP-SDF of the attached object. Analogous to the robot-link representation in~\eqref{eq:world_bpsdf}, we define
\begin{equation}
    \phi_{\mathrm{att}}(x;q)
    =
    \bar{\phi}_{\mathrm{att}}
    \!\left(\xi_{\mathrm{att}}(x;q)\right),
    \label{eq:attached_world_bpsdf}
\end{equation}
where $\xi_{\mathrm{att}}(x;q)$ denotes the coordinates of the world-frame query point $x$ expressed in the attached-object frame induced by $T_{WA}(q)=T_{WH}(q)T_{HA}$. Once grasped, the target becomes the protected body $b_{\mathrm{att}}\in\mathcal{B}_1$ as defined in \eqref{eq:protected_set}, and its BP-SDF enters the pairwise and composite barriers of Section~III in the same manner as the robot-link BP-SDFs.
\begin{table*}[!t]
    \centering
    \caption{Experimental progression from EEF-MVEE references to scene-aware
    EEF protection and WBAG.}
    \label{tab:method_configurations}
    \small
    \setlength{\tabcolsep}{5pt}
    \renewcommand{\arraystretch}{1.15}
    \begin{tabular*}{\textwidth}{@{\extracolsep{\fill}}lcccc@{}}
        \toprule
        Method & Perception & \shortstack{Obstacle\\geometry} &
        \shortstack{Protected robot\\geometry} & \shortstack{Attached geometry\\representation} \\
        \midrule
        AEGIS baseline~\cite{vlsa_aegis}
        & Grounding DINO~\cite{grounding_dino} & single MVEE & EEF ellipsoid & {\scriptsize handcrafted EEF proxy} \\
        SAM3 EEF-MVEE
        & SAM3~\cite{sam3} & single MVEE & EEF ellipsoid & {\scriptsize handcrafted EEF proxy} \\
        SAM3 Scene EEF-MVEE
        & SAM3~\cite{sam3} & scene MVEEs & EEF ellipsoid & {\scriptsize handcrafted EEF proxy} \\
        WBAG EEF-only + attached geometry
        & SAM3~\cite{sam3} & scene BP-SDFs & EEF-only & {\scriptsize attached BP-SDF} \\
        WBAG w/o attached geometry
        & SAM3~\cite{sam3} & scene BP-SDFs & whole-body BP-SDF & none \\
        \textbf{WBAG}
        & SAM3~\cite{sam3} & scene BP-SDFs & whole-body BP-SDF & {\scriptsize attached BP-SDF} \\
        \bottomrule
    \end{tabular*}
\end{table*}

\begin{figure*}[!t]
    \centering
    \begin{minipage}[t]{0.20\textwidth}
        \centering
        \includegraphics[width=\linewidth]{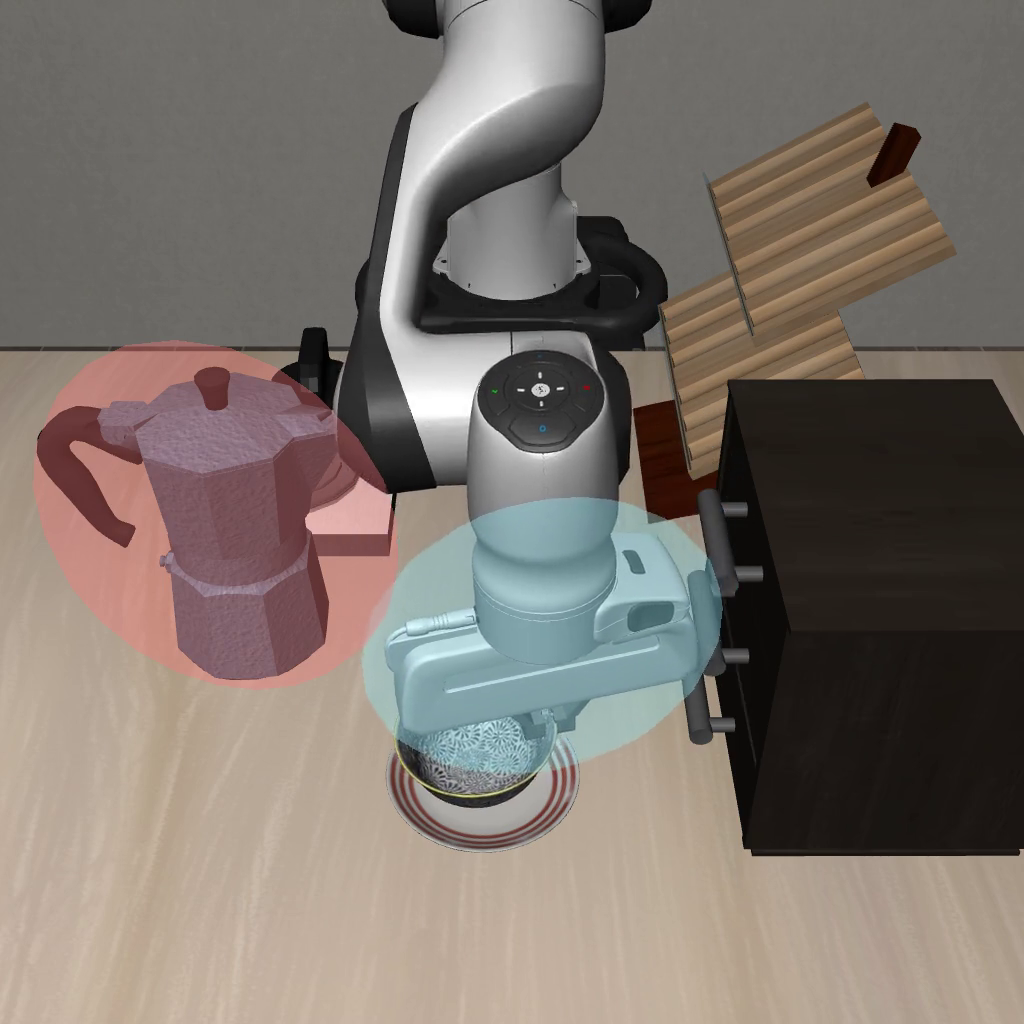}\\[-1pt]
        {\scriptsize\textbf{(a)} EEF-MVEE\\[-1pt] (AEGIS/SAM3)}
    \end{minipage}\hfill
    \begin{minipage}[t]{0.20\textwidth}
        \centering
        \includegraphics[width=\linewidth]{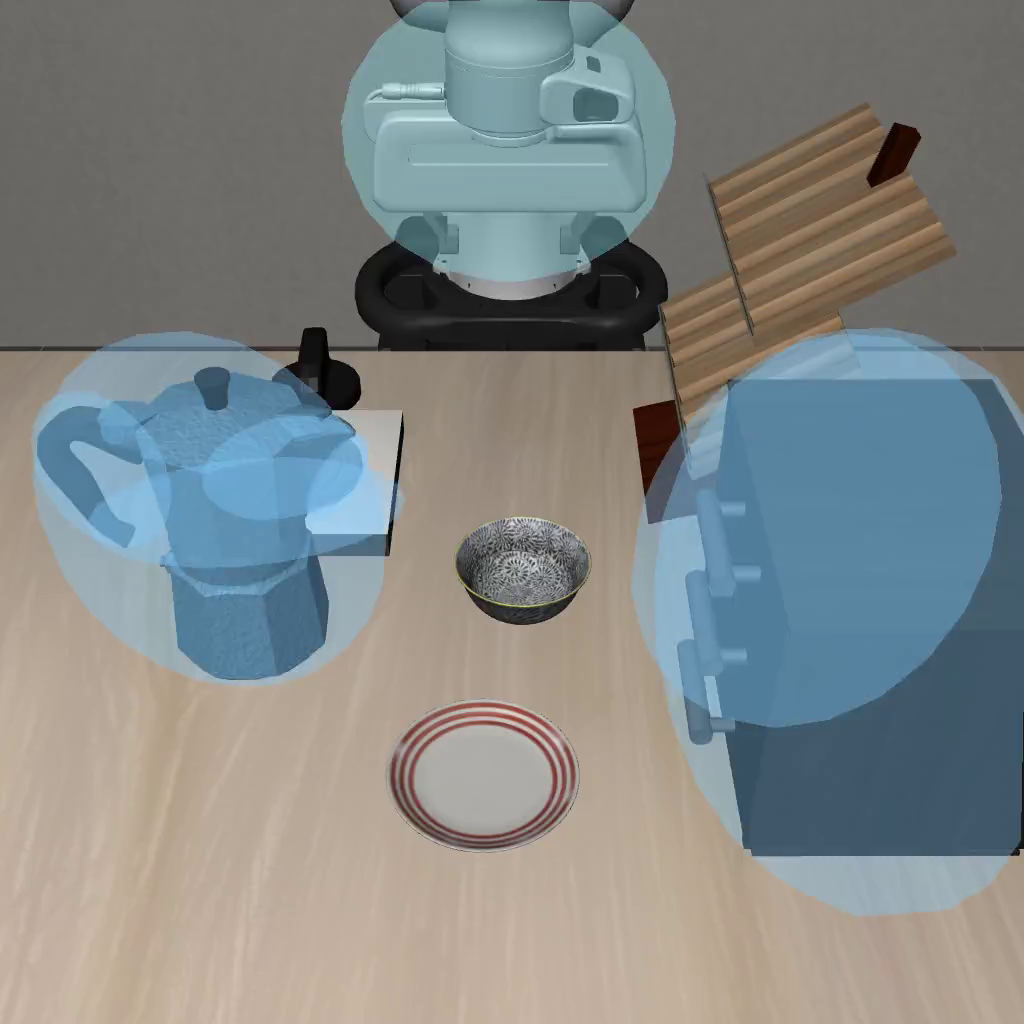}\\[-1pt]
        {\scriptsize\textbf{(b)} Scene EEF-MVEE}
    \end{minipage}\hfill
    \begin{minipage}[t]{0.20\textwidth}
        \centering
        \includegraphics[width=\linewidth]{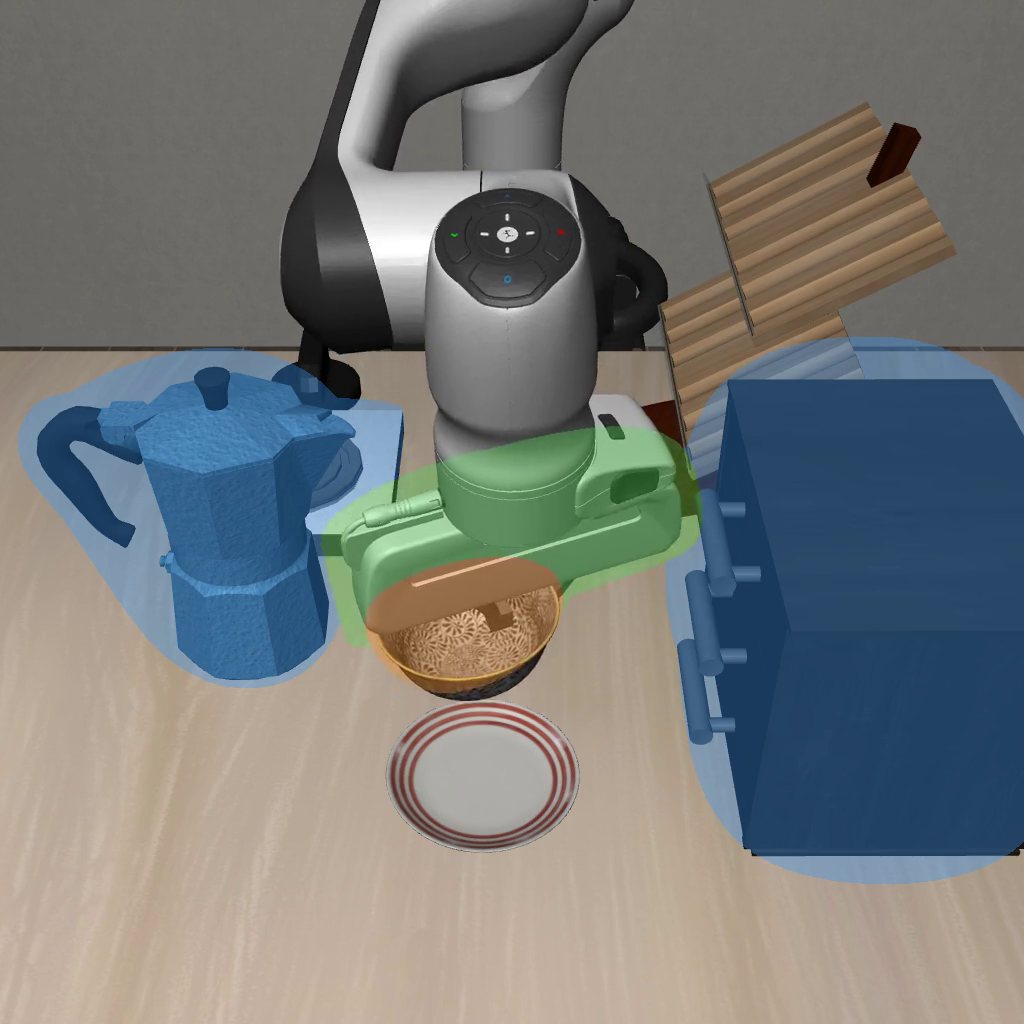}\\[-1pt]
        {\scriptsize\textbf{(c)} WBAG EEF-only\\[-1pt] + attached geometry}
    \end{minipage}\hfill
    \begin{minipage}[t]{0.20\textwidth}
        \centering
        \includegraphics[width=\linewidth]{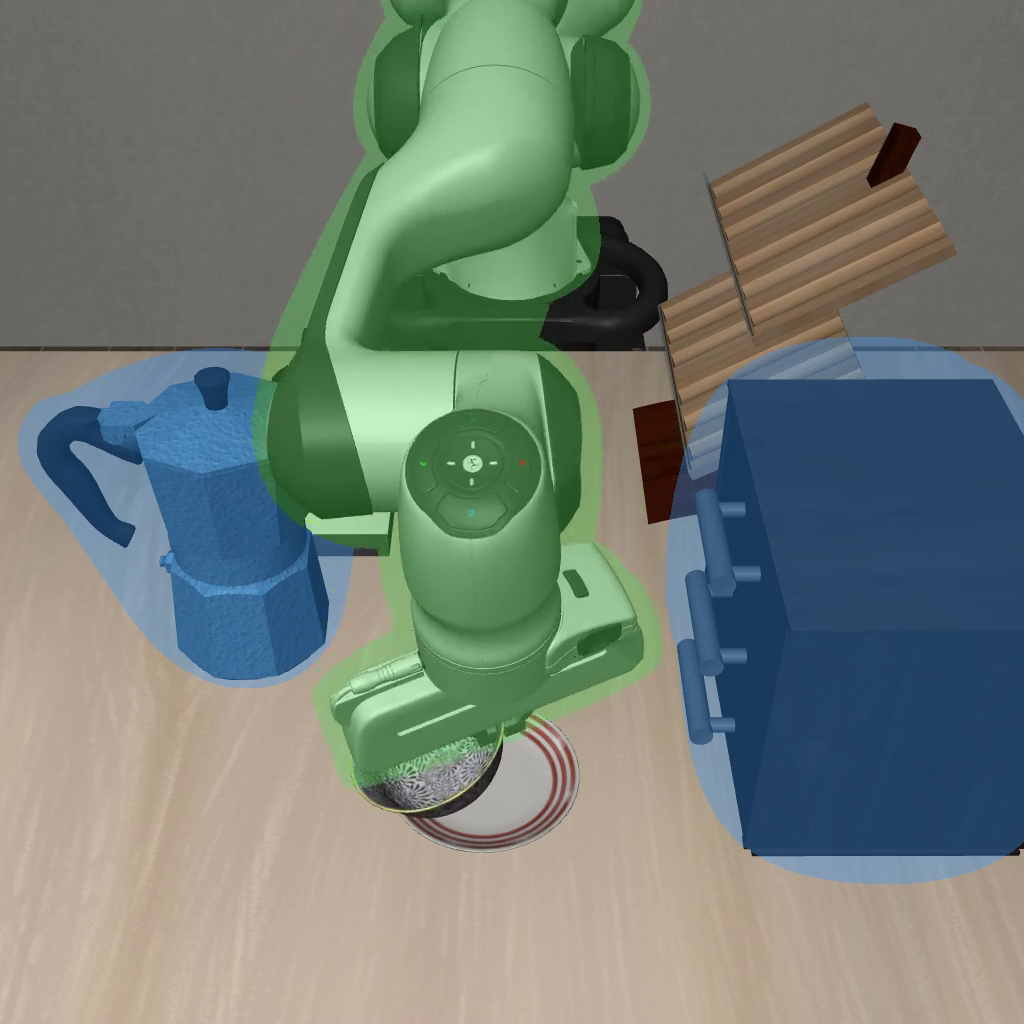}\\[-1pt]
        {\scriptsize\textbf{(d)} WBAG without\\[-1pt] attached geometry}
    \end{minipage}\hfill
    \begin{minipage}[t]{0.20\textwidth}
        \centering
        \includegraphics[width=\linewidth]{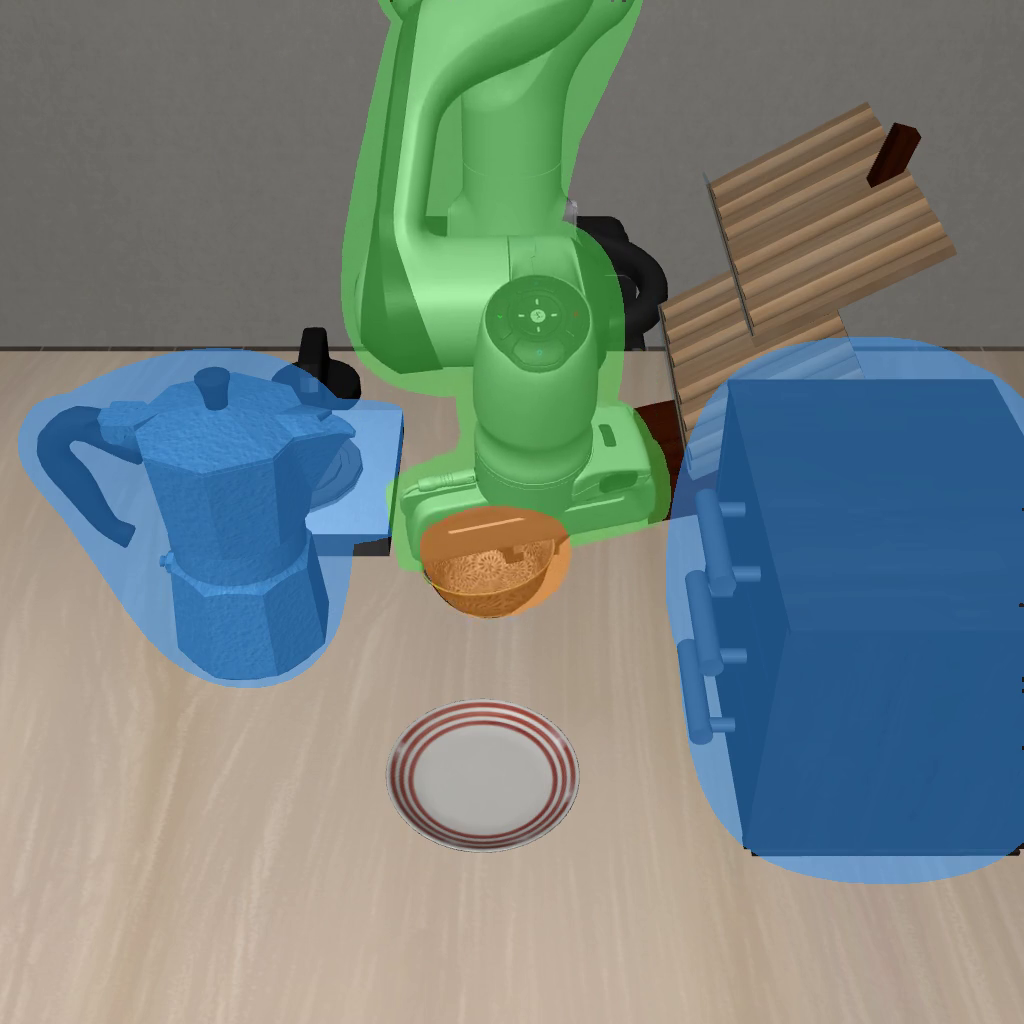}\\[-1pt]
        {\scriptsize\textbf{(e)} WBAG}
    \end{minipage}
    \caption{Visual comparison of the configurations in
    Table~\ref{tab:method_configurations}: (a) designated-obstacle EEF-MVEE,
    (b) scene-wide EEF-MVEE, (c) EEF-only with attached geometry, (d) whole-body
    without attached geometry, and (e) full WBAG.}
    \label{fig:method_geometry_comparison}
\end{figure*}

\subsection{Online Geometric Barrier Approximation}
Section~III defines $h_{bo}$ through a continuous closest-point search over $\Gamma_o$, which is impractical to solve for every protected-body--obstacle pair at each control step. WBAG therefore approximates this search using a finite set of local contact candidates. At control step $t$, let $\{x_{bo,k}^{t}\}_{k=1}^{N_{bo}^{t}}$ denote the candidate contact points for protected-body--obstacle pair $(b,o)$, where $N_{bo}^{t}$ is the number of candidates and $x_{bo,k}^{t}\in\mathbb{R}^{3}$ is the $k$th candidate point. For each candidate, we define
\begin{equation}
    \widehat h_{bo,k}^{\,t}(q)
    =
    \phi_b(x_{bo,k}^{t};q)-\ell_b,
    \label{eq:sampled_pairwise_barrier}
\end{equation}
where $\ell_b$ is the protected-body margin from~\eqref{eq:pairwise_barrier}. The candidate points are obtained by locally searching the obstacle level set $\Gamma_o$ for regions of small protected-body clearance. The deployed composite barrier for obstacle $o$ is then
\begin{equation}
    \widehat H_o^{\sigma,t}(q)
    =
    -\frac{1}{\beta}
    \log
    \left(
        \sum_{b\in\mathcal{B}_\sigma}
        \sum_{k=1}^{N_{bo}^{t}}
        \exp
        \left[
            -\beta
            \widehat h_{bo,k}^{\,t}(q)
        \right]
    \right),
    \label{eq:sampled_composite_barrier}
\end{equation}
using the same soft-min sharpness parameter $\beta$ as in~\eqref{eq:composite_barrier}.

\subsection{Action-to-Motion Mapping}
The barrier in Section~III is defined over robot configuration, whereas the frozen VLA outputs the six-dimensional operational-space increment $u$ in~\eqref{eq:vla_motion_action}. WBAG therefore maps this operational-space command to the local joint motion induced by the existing operational-space controller. Let $J(q)\in\mathbb{R}^{6\times n_q}$ denote the geometric Jacobian and define its damped pseudoinverse as
\begin{equation}
    J^\#_\lambda(q)
    =
    J(q)^\top
    \left(
        J(q)J(q)^\top+\lambda^2 I
    \right)^{-1},
    \label{eq:damped_pseudoinverse}
\end{equation}
where $\lambda>0$ is the damping coefficient. Let $D\in\mathbb{R}^{6\times6}$ contain the translational and rotational action scales of the downstream operational-space controller, and let $T_a>0$ denote the effective action horizon. We instantiate the local action-to-motion map $G(q)$ in~\eqref{eq:local_action_map} as
\begin{equation}
    \dot q
    =
    G_{\mathrm{OSC}}(q)u,
    \qquad
    G_{\mathrm{OSC}}(q)
    =
    J^\#_\lambda(q)\frac{D}{T_a}.
    \label{eq:osc_action_map}
\end{equation}
Thus, $G_{\mathrm{OSC}}(q)\in\mathbb{R}^{n_q\times6}$ maps the VLA's operational-space increment to an approximate local joint velocity. Applying this map to the deployed barrier from Section~IV-C gives
\begin{equation}
    \dot{\widehat H}_o^{\sigma,t}
    =
    \nabla_q \widehat H_o^{\sigma,t}(q_t)^\top
    G_{\mathrm{OSC}}(q_t)u.
    \label{eq:deployed_barrier_derivative}
\end{equation}
The modeled gripper geometry is treated as rigidly attached to the hand. For simplicity, finger motion induced by the scalar gripper command $g_{\mathrm{nom}}$ is not included in this local action-to-motion model.

\subsection{Grasp-Conditioned Geometric CBF-QP}
WBAG enforces the deployed grasp-conditioned CBF constraints through a minimum-intervention QP that selects the feasible command closest to the nominal VLA action:
\begin{equation}
\begin{aligned}
(u^\star,\delta^\star)
&=
\arg\min_{u,\delta\geq0}
\|W(u-u_{\mathrm{nom}})\|_2^2
+\lambda_\delta\delta^2\\
\text{s.t.}\qquad
&
\dot{\widehat H}_o^{\sigma,t}
+\alpha\!\left(
\widehat H_o^{\sigma,t}(q_t)
\right)
\geq -\delta,
\qquad \forall o\in\mathcal{O},\\
&
u_{\min}\leq u\leq u_{\max}.
\end{aligned}
\label{eq:wbag_qp}
\end{equation}
Here, $W\succeq0$ weights deviation from the nominal VLA command, $\delta\geq0$ is a shared slack variable, $\lambda_\delta>0$ penalizes CBF relaxation, and $u_{\min}$ and $u_{\max}$ are the native action bounds. For $\delta=0$, the constraint has the CBF form of~\eqref{eq:fixed_mode_cbf}, applied to the finite contact barrier $\widehat H_o^{\sigma,t}$. Because deployment uses this approximation and permits $\delta>0$, Proposition~\ref{prop:fixed_mode_safety} applies to the exact closest-point formulation with zero-slack, whereas the deployed filter instead uses finite contact search and permits slack to maintain QP feasibility. The resulting $u^\star$ is executed with the unchanged gripper command as in~\eqref{eq:safe_vla_action}.

\begin{table*}[t]
    \centering
    \caption{Performance on SafeLIBERO, aggregated over Levels I and II.}
    \label{tab:results_and_per_suite}
    \footnotesize
    \setlength{\tabcolsep}{2pt}
    \renewcommand{\arraystretch}{1.03}
    \begin{tabular*}{\textwidth}{@{\extracolsep{\fill}}lcccccccc@{}}
        \toprule
        Suite & Metric & \shortstack{Policy\\only} & \shortstack{AEGIS baseline} & \shortstack{SAM3\\EEF-MVEE} &
        \shortstack{SAM3 Scene\\EEF-MVEE} & \shortstack{WBAG EEF\\+ attached} &
        \shortstack{WBAG w/o\\attached} & \shortstack{\textbf{WBAG}} \\
        \midrule
        \multirow{3}{*}{Overall}
        & Scene Safety & 21.25\% & 70.87\% & 72.94\% & 82.75\% & 68.40\% & 77.50\% & \textbf{97.38\%} \\
        & Safe Success & 19.00\% & 51.06\% & 55.56\% & 46.25\% & 46.96\% & 46.31\% & \textbf{59.38\%} \\
        & Unsafe Success & 39.56\% & 11.25\% & 8.13\% & 4.31\% & 17.58\% & 14.69\% & \textbf{0.68\%} \\
        \midrule
        \multirow{3}{*}{Goal}
        & Scene Safety & 23.75\% & 83.50\% & 86.25\% & 77.25\% & 74.00\% & 88.75\% & \textbf{98.00\%} \\
        & Safe Success & 19.50\% & 72.75\% & \textbf{76.75\%} & 52.25\% & 47.50\% & 53.25\% & 58.00\% \\
        & Unsafe Success & 40.25\% & 7.00\% & 8.50\% & 8.25\% & 10.00\% & 6.75\% & \textbf{0.25\%} \\
        \midrule
        \multirow{3}{*}{Long}
        & Scene Safety & 22.25\% & 67.50\% & 64.75\% & 79.00\% & 62.00\% & 67.75\% & \textbf{96.25\%} \\
        & Safe Success & 20.00\% & 26.50\% & 30.50\% & 18.75\% & 34.00\% & 29.00\% & \textbf{48.75\%} \\
        & Unsafe Success & 36.25\% & 7.50\% & 7.75\% & 5.50\% & 21.00\% & 21.00\% & \textbf{2.25\%} \\
        \midrule
        \multirow{3}{*}{Object}
        & Scene Safety & 29.50\% & 59.50\% & 65.25\% & 90.00\% & 62.50\% & 71.00\% & \textbf{100.00\%} \\
        & Safe Success & 28.00\% & 47.50\% & 48.00\% & 46.25\% & 43.00\% & 45.00\% & \textbf{66.75\%} \\
        & Unsafe Success & 32.50\% & 21.50\% & 10.00\% & 1.25\% & 25.00\% & 23.25\% & \textbf{0.00\%} \\
        \midrule
        \multirow{3}{*}{Spatial}
        & Scene Safety & 17.75\% & 73.00\% & 75.50\% & 84.75\% & 69.25\% & 82.50\% & \textbf{95.25\%} \\
        & Safe Success & 17.00\% & 57.50\% & 67.00\% & \textbf{67.75\%} & 59.25\% & 58.00\% & 64.00\% \\
        & Unsafe Success & 44.50\% & 9.00\% & 6.25\% & 2.25\% & 12.75\% & 7.75\% & \textbf{0.25\%} \\
        \bottomrule
    \end{tabular*}
\end{table*}

\section{Results}
\label{sec:results}

\subsection{Evaluation Setup}
We evaluate on SafeLIBERO~\cite{vlsa_aegis}, a safety oriented extension of
LIBERO~\cite{libero} for tabletop manipulation. SafeLIBERO contains 16 tasks
drawn from the LIBERO Goal, Long, Object, and Spatial suites, with four tasks
from each suite. The Spatial tasks vary the spatial arrangement of the
manipulated objects, the Object tasks vary which object must be transported,
the Goal tasks vary the desired manipulation outcome, and the Long tasks
require longer sequential interactions involving multiple objects. Across
these suites, the robot performs pickup, transport, placement, articulated
interactions such as opening a drawer, and multi object manipulation.

SafeLIBERO augments each task with two safety obstacle placements. In Level I,
the designated obstacle is placed near the target object. In Level II, it is
placed farther from the target but along the robot's motion path. Each task
and safety level contains 50 randomized episodes, yielding 1,600 episodes in
total.

Following SafeLIBERO's displacement based safety proxy~\cite{vlsa_aegis}, we
evaluate all methods using a scene evaluator that monitors every eligible scene
body not required for task contact. An episode is unsafe if any such body is
displaced by more than $0.01$~m from its settled initial position $p_i(0)$:
\begin{equation}
    \left\|p_i(t)-p_i(0)\right\|_1 > 0.01~\mathrm{m}.
    \label{eq:scene_safety}
\end{equation}
Target objects, goals, and required support objects are excluded because
contact with them may be necessary for task completion.

We report three episode level metrics. \textbf{Scene Safety} holds when no
evaluated body violates Eq.~\eqref{eq:scene_safety}. \textbf{Safe Success}
denotes task completion with Scene Safety, whereas \textbf{Unsafe Success}
denotes task completion without Scene Safety.

All configurations use the same frozen $\pi_{0.5}$ checkpoint~\cite{pi05} and
identical episode initializations. Table~\ref{tab:method_configurations}
summarizes the evaluated configurations, and
Fig.~\ref{fig:method_geometry_comparison} visualizes their protected and
obstacle geometries. \emph{Policy Only} executes the frozen VLA without a
safety filter. \emph{AEGIS baseline} uses Grounding DINO
~\cite{grounding_dino} to localize the designated SafeLIBERO obstacle and
represents safety using an end effector minimum volume enclosing ellipsoid
(MVEE). \emph{SAM3 EEF MVEE} replaces Grounding DINO with SAM3~\cite{sam3}
while retaining the same designated obstacle and end effector representation.
\emph{SAM3 Scene EEF MVEE} extends obstacle coverage to all eligible scene
objects while retaining end effector protection. \emph{WBAG EEF only +
attached geometry} uses scene BP-SDFs and explicit attached object geometry
but protects only the end effector. \emph{WBAG without attached geometry}
uses whole body robot BP-SDFs but omits the grasped object from the protected
geometry. WBAG combines scene BP-SDFs, whole body robot protection,
and explicit attached object geometry.

To isolate the safety representation from semantic identification errors, all methods receive simulator provided identities for the target, goal,
and support objects when applicable. These identities replace the semantic
identification stage that would otherwise determine which objects require task
contact and should be excluded from the obstacle set. For \emph{AEGIS baseline} and \emph{SAM3 EEF MVEE}, simulator metadata
provides the identity of the designated SafeLIBERO safety obstacle. This
identity is used as the Grounding DINO~\cite{grounding_dino} query for \emph{AEGIS baseline}
and as the SAM3~\cite{sam3} prompt for \emph{SAM3 EEF MVEE}. Thus, both methods
localize the same designated obstacle from the visual observation and differ
only in the grounding model. \emph{SAM3 Scene EEF MVEE} and WBAG
configurations instead use the fixed SAM3 prompt set
\{\texttt{bottle}, \texttt{can}, \texttt{bowl}, \texttt{box},
\texttt{plate}, \texttt{container}, \texttt{black notebook}, \texttt{cup},
\texttt{blue moka pot}, \texttt{black handle}\} to localize eligible scene objects.

We instantiate the local action to motion map using controller scaling
$D=\operatorname{diag}(0.05,0.05,0.05,0.5,0.5,0.5)$ and effective action
horizon $T_a=0.35$~s. We use obstacle margin $\ell_o=0.01$~m and
protected body margin $\ell_b=0$. The soft minimum temperature, QP deviation
weight, and slack penalty are $\beta=100~\mathrm{m}^{-1}$,
$W=\operatorname{diag}(1,1,1,10,10,10)$, and $\lambda_\delta=10^4$,
respectively. WBAG protects all modeled moving robot links, including the
gripper and fingers, and adds the reconstructed target geometry after grasp
confirmation. The EEF MVEE configurations use the handcrafted EEF proxy
geometry from AEGIS~\cite{vlsa_aegis}, with fixed offset
$[0,0,-0.08]$~m and ellipsoid axes $[0.06,0.12,0.11]$~m by default, and
axes $[0.06,0.12,0.20]$~m for orange juice, milk, or alphabet soup.
Controller and evaluation settings are otherwise held fixed across
configurations, and BP-SDF fitting and contact search settings are fixed across
the WBAG configurations.

\subsection{Benchmark Performance}

Table~\ref{tab:results_and_per_suite} reports task and safety performance on
SafeLIBERO. Scene Safety increases from 70.87\% for \emph{AEGIS baseline} to
72.94\% for \emph{SAM3 EEF MVEE}, 82.75\% for
\emph{SAM3 Scene EEF MVEE}, and 97.38\% for full WBAG. Extending obstacle
coverage from \emph{SAM3 EEF MVEE} to \emph{SAM3 Scene EEF MVEE} improves
Scene Safety, but reduces Safe Success from 55.56\% to 46.25\%. WBAG
further increases Scene Safety to 97.38\% while achieving 59.38\% Safe Success
and reducing Unsafe Success to 0.68\%. These results show that broader scene
coverage alone improves safety at the cost of task completion, while WBAG improves safety without the same loss in Safe Success.

Across individual suites, WBAG achieves the highest Scene Safety and lowest
Unsafe Success in all four suites. WBAG also achieves the highest Safe Success
on Long and Object, while the EEF MVEE configurations achieve the highest Safe
Success on Goal and Spatial. This variation indicates that the effect of the
safety filter on task completion depends on the manipulation tasks within each
suite.

\begin{figure*}[!t]
    \centering
    \begin{minipage}[t]{0.31\textwidth}
        \centering
        \includegraphics[width=\linewidth]{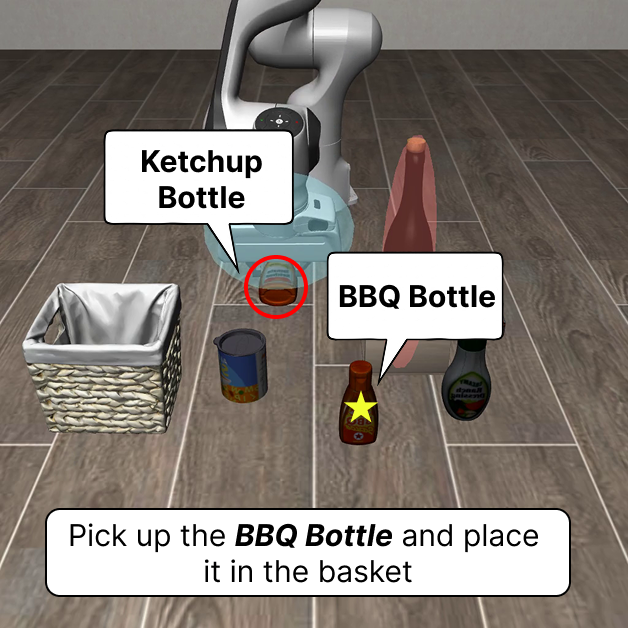}\\[-1pt]
        {\scriptsize\textbf{(a)} AEGIS baseline: incomplete scene coverage}
    \end{minipage}
    \begin{minipage}[t]{0.31\textwidth}
        \centering
        \includegraphics[width=\linewidth]{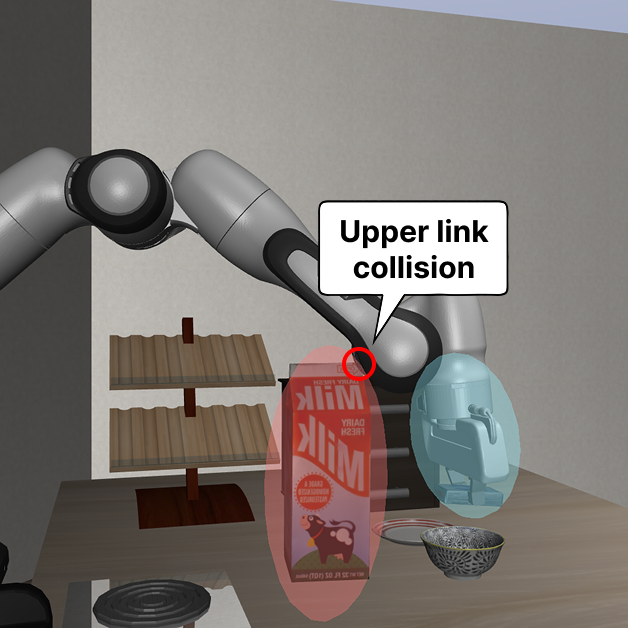}\\[-1pt]
        {\scriptsize\textbf{(b)} AEGIS baseline: upper-link collision}
    \end{minipage}
    \begin{minipage}[t]{0.31\textwidth}
        \centering
        \includegraphics[width=\linewidth]{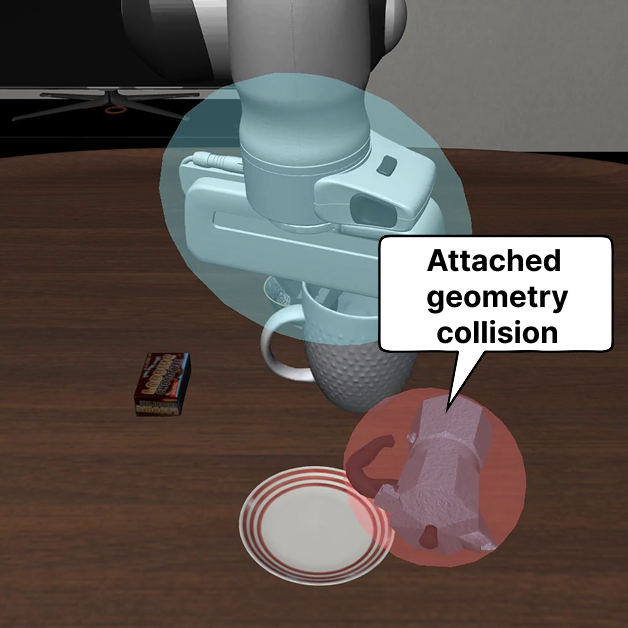}\\[-1pt]
        {\scriptsize\textbf{(c)} AEGIS baseline: attached-geometry collision}
    \end{minipage}
    \vspace{-3pt}
    \caption{Representative failure modes of simplified safety geometry:
    (a) unintended pickup of a non-task object excluded from the
    designated-obstacle constraint set, (b) upper-link collision under
    EEF-only protection, and (c) attached-object collision under a
    handcrafted EEF proxy.}
    \label{fig:failure_modes}
\end{figure*}

\subsection{Ablation and Qualitative Analysis}

The WBAG ablations in Table~\ref{tab:results_and_per_suite} isolate the effects
of whole body and attached object protection. Comparing WBAG with
\emph{WBAG EEF only + attached geometry} shows that whole body protection
improves both Scene Safety and Safe Success. Comparing WBAG with
\emph{WBAG without attached geometry} shows an additional benefit from
explicitly protecting the grasped object. Together, these comparisons show
that whole body and attached geometry modeling provide complementary benefits
for collision avoidance.

Figure~\ref{fig:failure_modes} illustrates the corresponding geometric failure
modes. Incomplete scene coverage can leave non task objects unconstrained.
End effector only protection can remain satisfied while an upstream robot link
collides with an obstacle. A handcrafted end effector proxy can also fail to
capture the shape and grasp relative pose of the manipulated object, allowing
collisions involving attached geometry.

\subsection{Runtime Analysis}
\begin{table}[!h]
    \vspace{2pt}
    \centering
    \caption{Mean per-step runtime for WBAG and policy inference.}
    \label{tab:runtime}
    \footnotesize
    \setlength{\tabcolsep}{3pt}
    \renewcommand{\arraystretch}{1.05}
    \begin{tabular*}{\columnwidth}{@{\extracolsep{\fill}}lc@{}}
        \toprule
        Component & Mean (ms) \\
        \midrule
        Contact search & 20.41 \\
        Robot BP-SDF eval. & 0.26 \\
        QP & 1.58 \\
        \midrule
        \textbf{Complete WBAG filter} & \textbf{26.76} \\
        Policy request & 15.43 \\
        \bottomrule
    \end{tabular*}
    \par\vspace{6pt}
    \noindent\parbox[t]{\columnwidth}{
    \scriptsize\textit{Per-step runtime is computed within each episode and
    then averaged across episodes. WBAG component timings are not exhaustive.}}
    \vspace{-18pt}
\end{table}

WBAG requires 26.76~ms per filtering step on average. Contact search accounts
for 20.41~ms, or approximately 76\% of the complete filter runtime, while
robot BP-SDF evaluation and QP optimization require only 0.26~ms and 1.58~ms,
respectively. The primary computational bottleneck is therefore the geometric
contact search used to approximate the closest point barriers rather than
barrier evaluation or QP optimization.

\section{Conclusion}
In this paper, 
we proposed WBAG, an inference-time safety framework for VLA manipulation that explicitly models whole-body and grasp-conditioned attached geometry. WBAG constructs BP-SDF clearance barriers over a grasp-conditioned protected set and enforces the resulting geometric CBF constraints directly on the VLA's native operational-space action, without retraining the VLA or modifying the operational-space controller.

On SafeLIBERO, WBAG achieves 97.38\% Scene Safety and 59.38\% Safe Success. Controlled ablations show complementary benefits from whole-body and attached-geometry protection, while the complete filter requires 26.76~ms per filtering step on average. Together, these results support explicitly modeling the full robot--attached-object system rather than relying on end-effector-centered safety geometry for runtime VLA manipulation.

The fixed-mode forward-invariance result applies to the ideal geometric barrier, whereas the deployed framework uses finite contact search, a local action-to-motion approximation, and QP slack. Our evaluation also relies on simulator-provided task-role identities and does not explicitly model uncertainty in reconstructed or attached geometry. Future work will investigate uncertainty-aware geometry and task-role inference, dynamic obstacles, policy-aware filtering that better preserves task completion, with validation in the real world.

\bibliographystyle{IEEEtran}
\bibliography{references}

\end{document}